\documentclass{article}

\usepackage{arxiv}

\usepackage[utf8]{inputenc} 
\usepackage[T1]{fontenc}    
\usepackage{hyperref}       
\usepackage{url}            
\usepackage{booktabs}       
\usepackage{amsfonts}       
\usepackage{nicefrac}       
\usepackage{microtype}      
\usepackage{lipsum}		
\usepackage{graphicx}
\usepackage{natbib}
\usepackage{doi}

\usepackage{mymacros}

\title{Performance Improvement Bounds for Lipschitz Configurable Markov Decision Processes}


\author{Alberto Maria Metelli \\
	Dipartimento di Elettronica Informazione e Bioingegneria\\ Politacnico di Milano\\ 32, Piazza Leonardo da Vinci\\ 20133, Milan, Italy\\
	\texttt{albertomaria.metelli@polimi.it} \\
}



\hypersetup{
pdftitle={Performance Improvement Bounds for Lipschitz Configurable Markov Decision Processes},
pdfsubject={},
pdfauthor={Alberto Maria Metelli},
pdfkeywords={},
}

\begin{document}
\maketitle

\begin{abstract}
Configurable Markov Decision Processes (Conf-MDPs) have recently been introduced as an extension of the traditional Markov Decision Processes (MDPs) to model the real-world scenarios in which there is the possibility to intervene in the environment in order to \emph{configure} some of its parameters. In this paper, we focus on a particular subclass of Conf-MDP that satisfies regularity conditions, namely Lipschitz continuity. We start by providing a bound on the Wasserstein distance between $\gamma$-discounted stationary distributions induced by changing policy and configuration. This result generalizes the already existing bounds both for Conf-MDPs and traditional MDPs. Then, we derive a novel performance improvement lower bound.
\end{abstract}

\section{Introduction}
The framework of the \emph{Configurable Markov Decision Processes}~\cite[Conf-MDPs,][]{metelli2018configurable,metelli2019reinforcement,MetelliMR22} has been introduced in recent years to model a wide range of real-world scenarios in which an agent has the opportunity to alter some environmental parameters in order to improve its learning experience. Conf-MDPs can be thought to as an extension of the traditional Markov Decision Processes~\cite[MDP,][]{Puterman94} to account for scenarios that emerge quite often in the Reinforcement Learning~\cite[RL,][]{sutton2018reinforcement} problems, in which the environment rarely represents an immutable entity and can, indeed, be subject to partial control. In the Conf-MDP framework, the activity of altering the environmental parameters is named \emph{environment configuration} and serves different purposes. 

In the simplest scenario, the configuration is carried out by the agent itself that acts as a configurator. This might suggest, at a first sight, that environment configuration can be modeled within the agent actuation. While in principle this approach is possible, it tends to disregard the domain peculiarities of environment configuration that usually is performed at a slower frequency compared to policy learning and might generate notable costs. The more realistic setting is the one in which an additional entity, the \emph{configurator} is present and in charge of the environment configuration process. The configurator acts on the environment transition model (that we will call \emph{configuration} in this setting) and might have an objective that is different from that of the agent. Thus, we can distinguish between two scenarios~\cite{phdthesis, metelli2022overview}: the \emph{cooperative} and the \emph{non-cooperative} settings. In the former, the agent and the configurator share the same interests, \ie the have the same reward function. In such a case, the learning problem is simple as there is no conflict between the two involved entities and several algorithms have been proposed in the literature~\cite{metelli2018configurable, metelli2019reinforcement}. In the non-cooperative setting, instead, agent and configurator have possibly diverging interests, \ie they reward function differ. Here the definition of a solution concept is more challenging and requires considering game-theoretic equilibria~\cite{phdthesis}. Recently a regret minimization approach has been proposed to address the learning problem in general-sum non-cooperative Conf-MDPs~\cite{ramponi2021learning}.

In this paper, we focus on cooperative Conf-MDP\footnote{For the sake of brevity, we will simply use the abbreviation \quotes{Conf-MDP} to denote a cooperative Conf-MDP.} and we derive several theoretical results on the performance improvement that can be obtained as an effect of altering either the policy of the environment configuration. We start from the theoretical results presented in~\cite{metelli2018configurable}, and further developed in~\cite{phdthesis}, and we generalize them to a wide class of Conf-MDPs. Specifically, we focus on two kind of results: (i) bounds on the distance between the $\gamma$-discounted stationary distributions and (ii) performance improvement lower bounds. The former quantifies the distance in the state visitation distributions induced when changing simultaneously the agent policy and the environment configuration; whereas the latter provides a computable approximated expression of the minimum performance attained with the selection of a new policy and a new configuration. As demonstrated by previous works, lower bounds on the performance improvement represent valuable tools to build a wide variety of effective learning approaches, including \emph{safe}~\cite{PirottaRPC13, MetelliPCR21} and \emph{trust-region}~\cite{SchulmanLAJM15,AchiamHTA17} approaches.

More in detail, we focus on a specific class of regular Conf-MDPs that we name \emph{Lipschitz} (LC) Conf-MDPs, in analogy with the traditional Lipschitz MDPs~\cite{RachelsonL10, PirottaRB15}. We introduce them in Section~\ref{sec:smooth}. From an intuitive perspective, and with little inaccuracy, we can think of the LC condition as the existence of the first derivative of the relevant quantities. Then, in Section~\ref{sec:boundGamma}, we provide the bounds on the $\gamma$-discounted stationary distributions. We start with a bound that merges together the effects of the policy and the configuration and then we move to a looser result that highlights the individual contributions of the policy and of the configuration. We compare the derived bound with the ones already present in the literature (even for MDPs only) and we show that, several of them, can be derived by our result, under specific conditions. In Section~\ref{sec:FINITEadv}, we revise the notion of \emph{advantage function} for Conf-MDPs, which quantifies the one-step gain in performance obtained by either modifying the policy, the configuration, or both. We report the policy, configuration, and joint advantages and we study their relationships. Finally, in Section~\ref{sec:boundPI}, we provide the performance improvement lower bounds. Specifically, we first derive a general tight bound that is hardly computable as it involves quantities that are usually unknown in practice. Then, we move to the derivation of a looser bound that has the advantage of being computable. Although mainly theoretical, the contributions provided in this paper are of independent interest, even outside the Conf-MDP field,\footnote{Every result we present can be employed for traditional MPDs as well, by just assuming not to change the environment configuration.} and can be effectively employed as a starting point for the design of safe and trust region methods. Part of the results presented in this paper has previously appeared in~\cite{metelli2018configurable, phdthesis}.

\section{Preliminaries}
In this section, we introduce the necessary background about probability, Lipschitz continuity, Wasserstien distance (Section~\ref{sec:prelProb}) and Conf-MDPs (Section~\ref{sec:prelCMDP}), that will be employed in the following sections.

\subsection{Mathematical Background}\label{sec:prelProb}

\paragraph{Probability} 
Let $\Xs$ be a set and $\mathfrak{F}$ be a $\sigma$-algebra over $\Xs$. We denote with $\PM{\Xs}$ the set of probability measures over the measurable space $(\Xs, \mathfrak{F})$. Let $\Ys$ be a set, we denote with $\Delta_{\Ys}^{\Xs}$ the set of functions with signature $\Ys \rightarrow \PM{\Xs}$ and with $\Ys^{\Xs}$ the set of functions with signature $\Xs \rightarrow \Ys$. Let $x \in \Xs$, the Dirac delta measure is denoted by $\delta_{x}$. 

\paragraph{Lipschitz Continuity}
Let $(\Xs, d_{\Xs})$ and $(\Ys, d_{\Ys})$ be two metric spaces, where $d_{\Xs}: \Xs \rightarrow [0,+\infty)$ and $d_{\Ys}: \Ys \rightarrow [0,+\infty)$ are the corresponding distance functions. A function $f \in \Ys^{\Xs}$ is \emph{Lipschitz continuous} with Lipschitz constant $L_f$ ($L_f$-LC) if it holds that:
\begin{align}\label{eq:Lip}
	d_{\Ys}\left( f(x), f(\overline{x}) \right) \le L_f d_{\Xs} \left(x, \overline{x} \right), \qquad \forall x,\overline{x} \in \Xs.
\end{align}
We define the \emph{Lipschitz semi-norm} of function $f$ as the smallest $L_f > 0$ for which Equation~\eqref{eq:Lip} holds:
\begin{align*}
	\|f\|_L = \sup_{x,\overline{x} \in \Xs, \, x \neq \overline{x}} \frac{d_{\Ys}\left( f(x), f(\overline{x}) \right)}{d_{\Xs} \left(x, \overline{x} \right)}.
\end{align*}
Being a semi-norm, $\|\cdot\|_L$ is non-negative and fulfills the triangular inequality. If $\Ys = \Reals$, \ie $f$ is a real-valued function, we typically employ as distance function $d_{\Ys}$ the Euclidean distance, \ie $d_{\Ys}(y,\overline{y}) = |y- \overline{y}|$. If $\Ys = \Delta^{\mathcal{Z}}$, \ie $f$ has values in probability distributions,  we employ as distance function the Wasserstein distance ~\citep{villani2009optimal}.

\paragraph{Wasserstein Distance}
Let $p,q \in \PM{\Xs}$ be two probability measures over the metric space $(\Xs, d_{\Xs})$. The $L_1$-Wasserstein (or Kantorovich) distance between $p$ and $q$ is defined as~\cite{villani2009optimal}:
\begin{align*}
	\mathcal{W}(p,q) = \suplip \left| \int_{\Xs} \left(p(\de x)- q(\de x) \right) f(x) \right|.
\end{align*}
If the metric $d_{\Xs}$ is chosen to be the \emph{discrete metric}, \ie $d_{\Xs}(x,x') = \mathds{1}\{x \neq x'\}$ for all $x,x' \in \Xs$, then the $L_1$-Wasserstein reduces to the Total Variation (TV) divergence.

\subsection{Configurable Markov Decision Processes}\label{sec:prelCMDP}
A (cooperative) Configurable Markov Decision Process~\citep[Conf-MDP,][]{metelli2018configurable} is a tuple $\mathcal{C} = (\Ss,\As,r,\gamma)$, where $\Ss$ is the state space, $\As$ is the action space, $r \in \Reals^{\SASs}$ is the reward function mapping every state-action-next-state triple $(s,a,s') \in \SASs$ to the reward $r(s,a,s') \in \Reals$, and $\gamma \in [0,1)$ is the discount factor. The behavior of the configurator is modeled by a configuration (or transition model) $p \in \PM{\Ss}_{\SAs}$ that maps every state-action pair $(s,a) \in \SAs$ to the probability measure $p(\cdot|s,a) \in \PM{\Ss}$ of the next state. The behavior of the agent is modeled by a policy $\pi \in \PM{\As}_{\Ss}$ that  maps every state $s\in\Ss$ to the probability measure $\pi(\cdot|s) \in \PM{\As}$ of the action to be played. We denote with $p_\pi \in \PM{\Ss}_{\Ss}$ the state-state transition kernel, defined as:
\begin{align*}
	p_\pi(\de s'|s) = \int_{\As} \pi(\de a|s) p(\de s'|s,a), \qquad \forall s,s' \in \Ss.
\end{align*}

\paragraph{Value Functions}
Similarly to traditional MDPs, the \emph{value functions}~\cite{sutton2018reinforcement} can be define for Conf-MDPs as done in~\cite{phdthesis}. Let $(\pi,p) \in \PM{\As}_{\Ss} \times \PM{\Ss}_{\SAs}$ be a policy-configuration pair, we define the following {value functions}, for every $(s,a,s') \in \mathcal{S \times A\times S}$:
\begin{align*}
	&\VpiP(s) = \int_{\As} \pi(\de a|s) \int_{\Ss} p(\de s'|s,a) \left( r(s,a,s') + \gamma \VpiP(s') \right), \\
	&\QpiP(s,a) = \int_{\Ss} p(\de s'|s,a) \left( r(s,a,s') + \gamma \VpiP(s') \right),\\
	& \UpiP(s,a,s') =  r(s,a,s') + \gamma \VpiP(s').
\end{align*}
These value functions represent the \emph{expected discounted cumulative reward} (\ie the \emph{expected return}) experienced by the agent-configurator when interacting with the environment with a policy $\pi $ and a configuration $p$. While $\VpiP(s)$ (\ie state value function or V-function) and $\QpiP(s,a)$ (\ie state-action value function or Q-function) are defined analogously to the case of MDPs, $\UpiP(s,a,s')$ (\ie state-action-next-state value function or U-function) is a peculiar value function for Conf-MDPs~\cite{metelli2018configurable} that turns out to be relevant for learning optimal configurations. We refer the reader to~\cite[][Chapter 4]{phdthesis} for a detailed review of the value function and the Bellman operators and equations for Conf-MDPs.

\paragraph{$\gamma$-discounted Stationary Distributions} 
A policy-configuration pair $(\pi,p) \in \PM{\As}_{\Ss} \times \PM{\Ss}_{\SAs}$ induces a visitation distribution of the states. The $\gamma$-discounted stationary distribution accounts the expected discounted number of visits\footnote{\quotes{Discounted} means that if a visit to a given state is performed at step $t$ of interaction it counts $\gamma^t$.} induced by a policy-configuration pair $(\pi,p) \in \PM{\As}_{\Ss} \times \PM{\Ss}_{\SAs}$, starting from an initial state sampled from $\isd \in \PM{\Ss}$, and it is defined in several equivalent forms~\cite{sutton1999policy}:
\begin{align}\label{eq:discDistr}
	\piPD^{\gamma,\isd} = (1-\gamma) \isd + \gamma \piP \piPD^{\gamma,\isd} = (1-\gamma) \sum_{t = 0}^{+\infty} \gamma^t \isd p_\pi^t = (1-\gamma) \isd \left(\Id_{\Ss} - \gamma p_\pi \right)^{-1}.
\end{align}
For the sake of brevity, whenever clear from the context, we omit the superscripts $\gamma,\isd$, simply employing $\piPD$.

Given an initial state distribution $\isd \in \PM{\Ss}$, we define the \emph{expected return} of a policy-configuration pair $(\pi,p) \in \PM{\As}_{\Ss} \times \PM{\Ss}_{\SAs}$ in two interchangeable ways:
\begin{align*}
	J_{\pi,p} = \int_{\Ss} \isd(\de s) \VpiP(s) = \frac{1}{1-\gamma} \int_{\SASs} \piPD(\de s) \pi(\de a |s) p(\de s'|s,a) r(s,a,s').
\end{align*}
%

\section{Lipschitz Configurable Markov Decision Processes}\label{sec:smooth}
In this section, we introduce the notion of regular Conf-MDPs that we will employ in the derivation of the theoretical results. We rephrase the traditional notion of Lipschitz (LC) MDP~\cite{RachelsonL10, PirottaRB15} to the case of Conf-MDPs.

\begin{ass}[Lipschitz Configurable Markov Decision Processes (LC)]\label{def:lc}
Let $\mathcal{C} = (\Ss, \As, r, \gamma)$ be a Conf-MDP, where $(\Ss, d_{\Ss})$ and $(\As, d_{\As})$ are metric spaces endowed with the corresponding distance functions $d_{\Ss}$ and $d_{\As}$, respectively.  $\mathcal{C}$ is $L_r$-LC if it holds that:
\begin{align*}
	\left|r(s,a,s') - r(\overline{s}, \overline{a}, \overline{s}') \right| \le L_r d_{\SASs}\left((s,a,s'),(\overline{s},\overline{a},\overline{s}')\right), \qquad \forall (s,a,s'), (\overline{s}, \overline{a}, \overline{s}') \in \SAs,
\end{align*}
where $d_{\SASs}\left((s,a,s'),(\overline{s},\overline{a},\overline{s}')\right) \coloneqq d_{\Ss}(s,\overline{s}) + d_{\As}(a,\overline{a}) + d_{\Ss}(s',\overline{s}')$.
\end{ass}

Compared with~\cite{RachelsonL10, PirottaRB15}, we enforce a condition on the reward function only, since the transition model (\ie configuration) does not belong to the definition of the Conf-MDP. Moreover, with negligible loss of generality, as done in~\cite{RachelsonL10, PirottaRB15}, we consider a disjoint metric for the state and action spaces. We now proceed at introducing the notion of Lipschitz configuration and policy.

\begin{ass}[Lipschitz Configuration and Policy (LC)]\label{def:lppi}
Let $\mathcal{C} = (\Ss, \As, r, \gamma)$ be a Conf-MDP, where $(\Ss, d_{\Ss})$ and $(\As, d_{\As})$ are metric spaces endowed with the corresponding distance functions $d_{\Ss}$ and $d_{\As}$, respectively.
	Let $p \in \PPP$ be a configuration. $p$ is $L_p$-LC if it holds that:
	\begin{align*}
	\mathcal{W}\left(p(\cdot|s,a),p(\cdot|\overline{s}, \overline{a})\right) \le L_p d_{\SAs} \left((s,a),(\overline{s},\overline{a})\right), \quad \forall (s,a),(\overline{s},\overline{a}) \in \SAs,
	\end{align*}
	where $d_{\SAs} \left((s,a),(\overline{s},\overline{a})\right) \coloneqq  d_{\Ss}(s,\overline{s}) + d_{\As}(a,\overline{a}) $. 	Let $\pi \in \PP$ be a policy. $\pi$ is $L_\pi$-LC if it holds that:
	\begin{align*}
			\mathcal{W}\left(\pi(\cdot|s),\pi(\cdot|\overline{s} )\right) \le L_\pi  d_{\Ss}(s,\overline{s}), \quad \forall s,\overline{s} \in \Ss.
	\end{align*}
\end{ass}

The choice of employing the Wasserstein distance over other distributional divergences, such as the TV divergence~\cite{MunosS08}, is justified by the fact that the former allows quantifying the distance between deterministic distributions. Instead, the TV divergence takes its maximum value whenever the involved distributions have disjoint support. The Wasserstein distance is a standard choice in the RL literature~\cite{PirottaRB15, MetelliMBSR20}.

\paragraph{Lipschitz semi-norms of the Value functions}
Once we enforce the LC conditions on the Conf-MDP and on the policy and configurations (Assumptions~\ref{def:lc} and~\ref{def:lppi}), it is well-known that the state value function $\VpiP$ and state-action value function $\QpiP$ are LC, under the assumption that $\gamma L_p(1+L_\pi) < 1$~\cite{RachelsonL10}. In particular, their Lipschitz semi-norms (\ie Lipschitz constants) are bounded as:
\begin{align*}
	 \lip{\QpiP} \le \frac{L_r}{1-\gamma L_p (1+L_\pi)}, \qquad \lip{\VpiP} \le \lip{\QpiP} (1+L_{\pi}) \le \frac{L_r(1+L_\pi)}{1-\gamma L_p (1+L_\pi)}.
\end{align*}

The following result provides a bound on the Lipschitz semi-norm of the state-action-next-state value function $\UpiP$, that has been specifically introduced for the Conf-MDPs.

\begin{lemma}\label{lemma:ULip}
Let $\mathcal{C}$ be an $L_r$-LC Conf-MDP, $p \in \PPP$ be an $L_p$-LC configuration, and $\pi \in \PP$ be an $L_\pi$-LC policy. Then, the state-action-next-state value function $\UpiP$ is LC, under the assumption that $\gamma L_p(1+L_\pi) < 1$, with Lipschitz semi-norm:
\begin{align*}
	\lip{\UpiP} \le  L_r + \gamma \lip{\VpiP} \le \frac{L_r(2 + L_{\pi} - \gamma L_p(1+L_\pi))}{1-\gamma L_p(1+L_\pi)}.
\end{align*}
\end{lemma}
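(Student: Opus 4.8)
The plan is to bound the pointwise difference of the $U$-function directly from its definition $\UpiP(s,a,s') = r(s,a,s') + \gamma \VpiP(s')$, and then divide by $d_{\SASs}$ and pass to the supremum to read off the Lipschitz semi-norm. First I would fix two triples $(s,a,s'),(\overline{s},\overline{a},\overline{s}') \in \SASs$ and apply the triangle inequality to separate the reward contribution from the value-function contribution:
\[
	\left|\UpiP(s,a,s') - \UpiP(\overline{s},\overline{a},\overline{s}')\right| \le \left|r(s,a,s') - r(\overline{s},\overline{a},\overline{s}')\right| + \gamma \left|\VpiP(s') - \VpiP(\overline{s}')\right|.
\]

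For the first term I would invoke the $L_r$-LC assumption (Assumption~\ref{def:lc}) to get the bound $L_r d_{\SASs}((s,a,s'),(\overline{s},\overline{a},\overline{s}'))$. For the second term I would use the already-established Lipschitz bound on $\VpiP$ (valid under $\gamma L_p(1+L_\pi) < 1$), which gives $|\VpiP(s') - \VpiP(\overline{s}')| \le \lip{\VpiP}\, d_{\Ss}(s',\overline{s}')$. The key observation is that $d_{\Ss}(s',\overline{s}') \le d_{\SASs}((s,a,s'),(\overline{s},\overline{a},\overline{s}'))$, since the compound distance is a sum of three non-negative terms, one of which is exactly $d_{\Ss}(s',\overline{s}')$. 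This lets me factor $d_{\SASs}$ out of both terms and conclude $\lip{\UpiP} \le L_r + \gamma \lip{\VpiP}$, which is the first inequality.

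For the second inequality I would substitute the known bound $\lip{\VpiP} \le L_r(1+L_\pi)/(1-\gamma L_p(1+L_\pi))$ into $L_r + \gamma \lip{\VpiP}$ and combine over the common denominator $1 - \gamma L_p(1+L_\pi)$, producing a numerator of the form $L_r\left[1 + \gamma(1+L_\pi) - \gamma L_p(1+L_\pi)\right]$. The only slightly nontrivial step — and the main (mild) obstacle — is matching this to the stated numerator $L_r(2 + L_\pi - \gamma L_p(1+L_\pi))$: this requires bounding $\gamma(1+L_\pi) \le 1 + L_\pi$, which follows from $\gamma < 1$, while leaving the subtractive term $-\gamma L_p(1+L_\pi)$ untouched. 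Everything else is routine triangle-inequality and algebra; there is no hard analytic content, because both the reward Lipschitz bound and the $\VpiP$ Lipschitz bound are available as inputs, so the work is essentially in correctly propagating the compound metric $d_{\SASs}$ and recognizing where the crude $\gamma \le 1$ relaxation is applied to reach the closed-form expression.
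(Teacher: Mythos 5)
Your proposal is correct and follows essentially the same route as the paper's proof: triangle inequality to split reward and value contributions, the $L_r$-LC assumption, the known bound on $\lip{\VpiP}$, and the observation that $d_{\Ss}(s',\overline{s}') \le d_{\SASs}((s,a,s'),(\overline{s},\overline{a},\overline{s}'))$. You additionally spell out the algebra behind the second inequality (the relaxation $\gamma(1+L_\pi) \le 1+L_\pi$ needed to reach the stated numerator $2+L_\pi-\gamma L_p(1+L_\pi)$), a step the paper leaves implicit, and you identify it correctly.
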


\begin{proof}
	Let $(s,a,s'),(\overline{s},\overline{a},\overline{s}') \in \SASs$, we have:
	\begin{align*}
		\left|\UpiP(s,a,s') - \UpiP(\overline{s},\overline{a},\overline{s}') \right| & = \left|r(s,a,s') + \gamma \VpiP(s') - r(\overline{s},\overline{a},\overline{s}') - \gamma \VpiP(\overline{s}')\right| \\
		& \le \left|r(s,a,s') - r(\overline{s},\overline{a},\overline{s}') \right| + \gamma \left| \VpiP(s') -  \VpiP(\overline{s}')\right| \\
		& \le  L_r d_{\SASs}\left((s,a,s'),(\overline{s},\overline{a},\overline{s}')\right)+ \gamma \lip{\VpiP} d_{\Ss}(s',\overline{s}') \\
		& \le \left( L_r + \gamma \lip{\VpiP} \right) d_{\SASs}((s,a,s'),(\overline{s},\overline{a},\overline{s}'),
	\end{align*}
	having observed that $d_{\SASs}\left((s,a,s'),(\overline{s},\overline{a},\overline{s}')\right) \ge  d_{\Ss}(s',\overline{s}')$.
\end{proof}

%
%
%
%
%
%
%
%
%

\section{Bound on the $\gamma$-discounted Stationary Distribution}\label{sec:boundGamma}

In this section, we provide a bound for the Wasserstein distance of $\gamma$-discounted stationary distributions under different policy-configuration pairs. We start with a result that bounds the  Wasserstein distance  of the $\gamma$-discounted stationary distributions in terms of the Wasserstein distance between the corresponding state-state transition kernels (Section~\ref{sec:coupledGamma}) and, then, we provide a looser bound that decouples the contribution of the policy from that of the transition model (Section~\ref{sec:uncoupledGamma}). Both results are obtained for LC Conf-MDPs, policies, and configurations. Finally, we provide a comparison with similar results already present in the literature (Section~\ref{sec:comp}).

\subsection{Coupled Bound}\label{sec:coupledGamma}
The following theorem provides the coupled bound on the $\gamma$-discounted stationary distribution, that merges together the contributions of the configuration and of the policy.

\begin{restatable}[]{thr}{distributionsBoundCoupled}\label{thr:distributionsBoundCoupled}
	Let $\ConfMDP$ be a Conf-MDP, let $(\pi,p),(\pi',p') \in \PP \times \PPP$ be two policy-configuration pairs such that $\pi'$ is $L_{\pi'}$-LC and $p' $ is $L_{p'}$-LC. Then, if $\gamma L_{p'}(1+L_{\pi'}) < 1$, it holds that:
    \begin{equation*}
    	\mathcal{W}\left( \piPD[\pi'][p'], \piPD\right) \le  \frac{\gamma}{1-\gamma L_{p'}(1+L_{\pi'})}  \int_{\Ss}\piPD(\de s)  \mathcal{W}{\left( \piP[\pi'][p'](\cdot|s), \piP[\pi][p](\cdot|s) \right) }.
    \end{equation*} 
\end{restatable} 

\begin{proof}
	The derivation presents similarities with the proof of Lemma 2 of~\cite{PirottaRB15}. Exploiting the recursive equation of the $\gamma$-discounted state distribution (Equation~\ref{eq:discDistr}), we can write the distributions difference as follows, in operator form:
	\begin{align}
	\piPD[\pi'][p'] - \piPD & = (1-\gamma)\isd + \gamma \piPD[\pi'][p'] \piP[\pi'][p'] - (1-\gamma)\isd - \gamma \piPD \piP \notag \\
	& = \gamma \piPD[\pi'][p']\piP[\pi'][p'] - \gamma \piPD \piP \pm \piPD \piP[\pi'][p']  \notag \\
	& = \gamma  \left(\piPD[\pi'][p'] - \piPD \right)\piP[\pi'][p'] + \gamma \piPD \left( \piP[\pi'][p'] - \piP\right) \notag \\
	& = \gamma \piPD \left( \piP[\pi'][p'] - \piP\right) \left( \Id_{\Ss} - \gamma \piP[\pi'][p'] \right)^{-1},\label{p:0-1}
	\end{align}
	where we exploited the recursive definition of $\piPD[\pi'][p'] - \piPD$ and recalled that $\gamma < 1$. We proceed by computing the Wasserstein distance:
	\begin{align}
	\wass{\piPD[\pi'][p']}{\piPD} & = \suplip \left|\int_{\Ss} \left( \piPD[\pi'][p'](\de s) - \piPD(\de s) \right) f(s) \right|  \notag \\
	& = \gamma \suplip \left|\int_{\Ss} \piPD(\de s'') \int_{\Ss} \left( \piP[\pi'][p'](\de s'|s''), \piP[\pi][p](\de s'|s'') \right) \underbrace{\int_{\Ss} \left( \Id_{\Ss} - \gamma \piP[\pi'][p'] \right)^{-1} (\de s|s') f(s)}_{\eqqcolon g_f(s')} \right|\label{p:000} \\
	& \le  \gamma \int_{\Ss} \piPD(\de s'')  \suplip \left|\int_{\Ss} \left( \piP[\pi'][p'](\de s'|s''), \piP[\pi][p](\de s'|s'') \right) g_f(s') \right|\label{p:001} \\
	& \le \gamma \int_{\Ss} \piPD(\de s'')  \wass{ \piP[\pi'][p'](\cdot|s'')}{\piP[\pi][p](\cdot|s'') } \suplip \lip{g_f},\label{p:002}
	\end{align}
	where in line~\eqref{p:000} we exploited Equation~\eqref{p:0-1}, line~\eqref{p:001} follows from Jensen's inequality, and line~\eqref{p:002} comes from the definition of Wasserstein distance. We now compute the Lipschitz semi-norm $\lip{g_f}$. Let us introduce the distribution $\mu_{\pi',p'}^{\gamma,\delta_{s}} = (1-\gamma) \delta_s + \gamma \piP[\pi'][p'] \mu_{\pi',p'}^{\gamma,\delta_{s}}= (1-\gamma) \delta_s \left( \Id_{\Ss} - \gamma \piP[\pi'][p']\right)^{-1}$, \ie the $\gamma$-discounted stationary distribution when the initial state distribution is the Dirac delta $\delta_s$. We observe that $(1-\gamma) g_f(s) =  \int_{\Ss} \mu_{\pi',p'}^{\gamma,\delta_{s}} (\de s') f(s')$. Thus, to compute the Lipschitz semi-norm $\lip{g_f}$, we proceed as follows for $s,\overline{s} \in \Ss$:
	\begin{align*}
		(1-\gamma) g_f(s) & - (1-\gamma) g_f(\overline{s})  = \int_{\Ss} \left( \mu_{\pi',p'}^{\gamma,\delta_{s}} (\de s') - \mu_{\pi',p'}^{\gamma,\delta_{\overline{s}}} (\de s')  \right) f(s') \le \wass{\mu_{\pi',p'}^{\gamma,\delta_{s}} }{\mu_{\pi',p'}^{\gamma,\delta_{\overline{s}}} }.
	\end{align*}
	Thus, we have:
	\begin{align}
		& \wass{\mu_{\pi',p'}^{\gamma,\delta_{s}} }{\mu_{\pi',p'}^{\gamma,\delta_{\overline{s}}} } = \suplip \Bigg|(1-\gamma) \int_{\Ss} \left( \delta_s(\de s') - \delta_{\overline{s}}(\de s')\right) f(s') \notag \\
		& \qquad\qquad\qquad \qquad\qquad\qquad\quad + \gamma \int_{\Ss} \left( \left( \piP[\pi'][p'] \mu_{\pi',p'}^{\gamma,\delta_{s}} \right) (\de s') -\left( \piP[\pi'][p'] \mu_{\pi',p'}^{\gamma,\delta_{\overline{s}}} \right)(\de s') \right) f(s') \Bigg| \label{p:10001} \\
		& \qquad \le (1-\gamma) d_{\Ss}(s,\overline{s}) + \gamma  \suplip \left|\int_{\Ss}\left( \mu_{\pi',p'}^{\gamma,\delta_{s}}(\de s'') - \mu_{\pi',p'}^{\gamma,\delta_{\overline{s}}}(\de s'') \right) \underbrace{ \int_{\Ss} \piP[\pi'][p'] (\de s'|s'') f(s')}_{\eqqcolon h_f(s'')} \right|\label{p:10002} \\
		& \qquad \le (1-\gamma) d_{\Ss}(s,\overline{s}) + \gamma \suplip  \lip{h_f} \suplip \left| \int_{\Ss}\left( \mu_{\pi',p'}^{\gamma,\delta_{s}}(\de s'') - \mu_{\pi',p'}^{\gamma,\delta_{\overline{s}}}(\de s'') \right)  f(s'')\right| \notag\\
		& \qquad = (1-\gamma) d_{\Ss}(s,\overline{s}) + \gamma \suplip  \lip{h_f} \wass{\mu_{\pi',p'}^{\gamma,\delta_{s}} }{\mu_{\pi',p'}^{\gamma,\delta_{\overline{s}}} },  \notag
	\end{align}
	where line~\eqref{p:10001} follows from the definition of $\mu_{\pi',p'}^{\gamma,\delta_{s}}$, line~\eqref{p:10002} follows from observing that $ \int_{\Ss} \left( \delta_s(\de s') - \delta_{\overline{s}}(\de s')\right) f(s') = f(s)- f(\overline{s}) \le d_{\Ss}(s,\overline{s})$ since $\|f\|_L \le 1$ and by the definition of $\piP[\pi'][p'] \mu_{\pi',p'}^{\gamma,\delta_{s}}$. The Lipschitz semi-norm $\lip{h_f}$ is computed in Lemma~\ref{lemma:lipSum}, resulting in $L_{p'}(1+L_{\pi'})$. Putting all together and exploiting the recursion, we obtain:
	\begin{align*}
	\wass{\mu_{\pi',p'}^{\gamma,\delta_{s}} }{\mu_{\pi',p'}^{\gamma,\delta_{\overline{s}}} } \le \frac{1-\gamma}{1-\gamma L_{p'}(1+L_{\pi'})} d_{\Ss}(s,\overline{s}) \quad \implies \quad g_f(s)  -  g_f(\overline{s}) \le  \frac{1}{1-\gamma  L_{p'}(1+L_{\pi'})} d_{\Ss}(s,\overline{s}),
	\end{align*}
	that concludes the proof.
\end{proof}

Thus, we have bounded the Wasserstein distance between the $\gamma$-discounted stationary distributions with the Wasserstein distance between the state transition kernels, averaged over the distribution $\piPD$. The resulting multiplicative constant involves the Lipschitz constants of the policy $\pi'$ and of the configuration $p'$. Therefore, remarkably, in order to obtain such a result it is not required that the policy-configuration pair $(\pi,p)$ is LC.
%

\subsection{Decoupled Bound}\label{sec:uncoupledGamma}

In some applications, it is useful to decouple the contribution of the configuration $p$ and that of the policy $\pi$ in the bound of the Wasserstein distance between the $\gamma$-discounted stationary distributions. Indeed, in several applicative scenarios of the Conf-MDP, as noted earlier, the configuration activity (\ie altering the transition model of the environment) and the policy learning are carried out by two different entities, possibly at different time scales. The following corollary provides a looser bound in which the contribution of policy and configurations are separated.

\begin{restatable}[]{coroll}{BoundDisj}\label{coroll:BoundDisj}
	Let $\ConfMDP$ be a Conf-MDP, let $(\pi,p),(\pi',p') \in \PP \times \PPP$ be two policy-configuration pairs such that $\pi'$ is $L_{\pi'}$-LC and $p' $ is $L_{p'}$-LC. Then, if $\gamma L_{p'}(1+L_{\pi'}) < 1$, it holds that:
    \begin{align*}
    \mathcal{W}\left( \piPD[\pi'][p'], \piPD\right) & \le  \frac{\gamma}{1-\gamma L_{p'}(1+L_{\pi'})}  \int_{\SAs}\piPD(\de s) \pi(\de a |s) \mathcal{W}{\left( p'(\cdot|s,a), p(\cdot|s,a) \right) } \\
    & \quad +\frac{\gamma L_{p'}}{1-\gamma L_{p'}(1+L_{\pi'})}  \int_{\Ss}\piPD(\de s)  \mathcal{W}{\left( \pi'(\cdot|s), \pi(\cdot|s) \right) }.
    \end{align*}
\end{restatable}

\subsection{Comparison with Existing Bounds}\label{sec:comp}
The bounds presented in the previous sections generalize several existing results present the literature. In particular, Corollary~\ref{coroll:BoundDisj} is an extension of Lemma 3 of~\cite{PirottaRB15} for the case of Conf-MDPs in which we allow for the modification of the configuration (\ie transition model) too. Furthermore, even in the non-configurable setting, \ie when setting $p=p'$ a.s., our result is stronger as it involves the  Wasserstein distance between policies \emph{averaged} over the $\gamma$-discounted stationary distribution instead of its \emph{supremum} over the state space:
\begin{align*}
	\underbrace{ \int_{\Ss}\piPD(\de s)  \mathcal{W}{\left( \pi'(\cdot|s), \pi(\cdot|s) \right) }}_{\text{Our Corollay~\ref{coroll:BoundDisj}}} \le \underbrace{\sup_{s \in \Ss}  \mathcal{W}{\left( \pi'(\cdot|s), \pi(\cdot|s) \right)}}_{\text{Lemma 3 of~\cite{PirottaRB15}}}.
\end{align*} 

A result connected to ours, with a bound involving the modification of the transition model, appeared in~\cite{AsadiML18} for model-based RL. However, the provided equation holds under a uniform bound on the Wasserstein distance between transition models $\sup_{(s,a) \in \SAs} \wass{p'(\cdot|s,a)}{p(\cdot|s,a)}$, that, instead, we relax considering the Wasserstein distance averaged over  the $\gamma$-discounted stationary distribution, \ie $ \int_{\SAs} \mu_{\pi,p}(\de s) \pi(\de a|s) \wass{p'(\cdot|s,a)}{p(\cdot|s,a)}$. 

Finally, in~\cite{truly2022} a bound on the Wasserstein distance between the $\gamma$-discounted stationary distributions, as a function of the modification of the policy only, is proposed under a different set of regularity assumptions. In particular, two conditions on the transition kernel are enforced for every $\pi,\pi' \in \PP$ and $\nu,\nu' \in \PM{\Ss}$:
\begin{align*}
	& \wass{\int_{\Ss} \nu(\de s) p_{\pi}(\cdot|s)}{ \int_{\Ss} \nu(\de s) p_{\pi'}(\cdot|s)} \le \widetilde{L}_{\pi} \sup_{s \in \Ss} \wass{\pi(\cdot|s)}{\pi'(\cdot|s)}, \\
	& \wass{\int_{\Ss} \nu(\de s) p_{\pi}(\cdot|s)}{ \int_{\Ss} \nu'(\de s) p_{\pi}(\cdot|s)} \le \widetilde{L}_{\nu} \wass{\nu}{\nu'},
\end{align*}
where $\widetilde{L}_{\pi}$ and $ \widetilde{L}_{\nu}$ are suitable Lipschitz constants. These conditions, however, are not comparable with the ones considered in the present paper (Section~\ref{sec:smooth}), as they evaluate how fast the transition kernel changes when altering either the policy or the initial state distribution. Consequently, it is hard to argue whether of the resulting bound is tighter than ours. Nevertheless, it is worth noting that ours is the only one that involves the average Wasserstein distance between policies, rather than the supremum over the state (or state-action) space.

\section{Relative Advantage Functions}\label{sec:FINITEadv}
In this section, we revise the \emph{relative advantage functions} for Conf-MDPs that are needed for the derivation of the performance improvement bounds, as introduced in~\cite{metelli2018configurable, phdthesis}. The notion of \emph{advantage function} exists in the traditional MDP setting and evaluates the one-step improvement in executing an action $a \in \As$ compared to executing the current policy $\pi \in \PP$~\cite{Puterman94}. 

We now extend the advantage function notion to the Conf-MDP setting, properly accounting for the presence of the agent and the configuration. Specifically, we introduce three notions: the \emph{policy} advantage function, the \emph{configuration} advantage function, and the \emph{coupled} advantage function, respectively defined for every $(s,a,s') \in \SASs$ as:
\begin{align*}
	& A_{\pi,p}(s,a) = \QpiP(s,a) - \VpiP(s), \\
	& A_{\pi,p}(s,a,s') = \UpiP(s,a,s') - \QpiP(s,a), \\
	& \widetilde{A}_{\pi,p}(s,a,s') = \UpiP(s,a,s') - \VpiP(s) = A_{\pi,p}(s,a,s') + A_{\pi,p}(s,a).
\end{align*}
These functions evaluate the one-step performance improvement obtained in state $s \in \Ss$ by either playing action $a \in \As$, for the policy advantage, selecting the next state $s' \in \Ss$ given that action $a \in \As$ was played, for the model advantage, or both for the coupled advantage, compared to playing policy $\pi$ and transition model $p$. 

To quantify the one-step improvement in performance attained by a new policy $\pi'$ or transition model $p'$ when the current policy is $\pi$ and the current model is $p$, we introduce the \emph{(decoupled) relative advantage functions}~\citep{kakade2002approximately} defined for every state-action pair $\sasa$ as:
\begin{align*}
	&A_{\pi,p}^{\pi',p}(s) = \int_{\mathcal{A}} \pi'(\de a|s) A_{\pi,p}(s,a),\\
    &A_{\pi,p}^{\pi,p'}(s,a) = \int_{\mathcal{S}} p'(\de s'|s,a) A_{\pi,p}(s,a,s'),
\end{align*}
and the corresponding expected values under the $\gamma$-discounted distributions:
\begin{align*}
& \mathbb{A}_{\pi,p,\isd}^{\pi',p} = \int_{\mathcal{S}} \mu_{\pi,p} (\de s) A_{\pi,p}^{\pi',p}(s) \\
& \mathbb{A}_{\pi,p,\isd}^{\pi,p'} = \int_{\mathcal{S}} \int_{\mathcal{A}} \mu_{\pi,p}(\de s) \pi(\de a|s) A_{\pi,p}^{\pi,p'}(s,a).
\end{align*}
In order to account for the combined effect of choosing the action with a new policy $\pi'$ and the next state with the new configuration $p'$, we introduce the  \emph{coupled relative advantage function} defined for every state $s \in \Ss$ as:
\begin{equation*}
		A_{\pi,p}^{\pi',p'}(s) = \int_{\mathcal{S}} \int_{\mathcal{A}}  \pi'(\de a|s)p'(\de s'|s,a) \widetilde{A}_{\pi,p}(s,a,s').
\end{equation*}
Thus, $A_{\pi,p}^{\pi',p'}$ evaluates the one-step improvement obtained by the new policy-configuration pair $(\pi',p')\in \PP \times \PPP$ over
the current one $(\pi,p) \in \PP \times \PPP$, \ie the local gain in performance achieved by selecting an action with $\pi'$ and the next state with $p'$. The corresponding expectation under the $\gamma$-discounted distribution is given by: 
\begin{equation*}
\mathbb{A}_{\pi,p,\isd}^{\pi',p'} = \int_{\mathcal{S}} \mu_{\pi,p} (\de s) A_{\pi,p}^{\pi',p'}(s).
\end{equation*}
To lighten the notation, we remove the subscript of the initial state distribution $\isd$ whenever clear from the context. Thus, we simply write $\mathbb{A}_{\pi,p}^{\pi',p}$, $\mathbb{A}_{\pi,p}^{\pi,p'}$, and $\mathbb{A}_{\pi,p}^{\pi',p'}$. The following result relates the coupled relative advantage function with the corresponding (decoupled) relative advantage functions.

\begin{restatable}[Lemma A.1 of \cite{metelli2018configurable}]{lemma}{lemmaAdvantage}\label{thr:lemmaAdvantage}
	Let $A_{\pi,p}^{\pi',p'}$ be the coupled relative advantage function, $A_{\pi,p}^{\pi',p}$ and $ A_{\pi,p}^{\pi,p'}$ be the (decoupled) policy and configuration relative advantage functions respectively. Then, for every state $s \in \Ss$ it holds that:
	\begin{equation*}
		A_{\pi,p}^{\pi',p'}(s) = A_{\pi,p}^{\pi',p}(s) + \int_{\mathcal{A}} \pi'(\de a|s) A_{\pi,p}^{\pi,p'}(s,a).
	\end{equation*}
\end{restatable}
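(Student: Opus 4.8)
The plan is to start directly from the definition of the coupled relative advantage function and exploit the additive decomposition of the coupled one-step advantage $\widetilde{A}_{\pi,p}$ that is already recorded in the preliminaries, namely $\widetilde{A}_{\pi,p}(s,a,s') = A_{\pi,p}(s,a,s') + A_{\pi,p}(s,a)$. This identity follows immediately by telescoping $\UpiP(s,a,s') - \VpiP(s)$ through the intermediate quantity $\QpiP(s,a)$, and it is the only structural fact required; everything else is integral bookkeeping. Concretely, I would substitute this decomposition into the integrand of $A_{\pi,p}^{\pi',p'}(s) = \int_{\mathcal{S}}\int_{\mathcal{A}} \pi'(\de a|s)\, p'(\de s'|s,a)\, \widetilde{A}_{\pi,p}(s,a,s')$ and split the expression into two iterated integrals by linearity.

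The term carrying $A_{\pi,p}(s,a,s')$ is handled by recognizing the inner integral over $s'$ against $p'(\cdot|s,a)$ as exactly the configuration relative advantage $A_{\pi,p}^{\pi,p'}(s,a)$, so that after reorganizing the iterated integrals this piece becomes $\int_{\mathcal{A}} \pi'(\de a|s)\, A_{\pi,p}^{\pi,p'}(s,a)$, which is the second summand of the claim. The term carrying $A_{\pi,p}(s,a)$ requires the single observation that this factor does not depend on $s'$; since $p'(\cdot|s,a)$ is a probability measure on $\Ss$, integrating it out contributes a factor of one, the inner integral collapses, and what remains is $\int_{\mathcal{A}} \pi'(\de a|s)\, A_{\pi,p}(s,a)$, which is by definition the policy relative advantage $A_{\pi,p}^{\pi',p}(s)$. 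Adding the two contributions yields the stated identity.

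There is no genuine obstacle here: the argument is a one-line substitution followed by a split of the double integral into its outer and inner parts. The only point demanding minor care is tracking which measure integrates against which variable, specifically noting that $A_{\pi,p}(s,a)$ survives the $p'$-integration unchanged precisely because it is constant in $s'$. Notably, no Lipschitz or regularity assumptions from Section~\ref{sec:smooth} are invoked, consistent with the fact that this lemma expresses a purely algebraic relation among the advantage functions rather than a metric or continuity property.
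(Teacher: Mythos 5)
Your proof is correct and takes essentially the same route as the paper's: both telescope $\UpiP(s,a,s') - \VpiP(s)$ through the intermediate quantity $\QpiP(s,a)$ and then identify the two resulting pieces with $A_{\pi,p}^{\pi',p}(s)$ and $\int_{\mathcal{A}} \pi'(\de a|s)\, A_{\pi,p}^{\pi,p'}(s,a)$. The only cosmetic difference is that you perform the telescoping pointwise via the preliminaries' identity $\widetilde{A}_{\pi,p}(s,a,s') = A_{\pi,p}(s,a,s') + A_{\pi,p}(s,a)$ and invoke the normalization of $p'(\cdot|s,a)$ explicitly, whereas the paper adds and subtracts $\int_{\mathcal{A}}\int_{\mathcal{S}} \pi'(\de a|s)\, p(\de s'|s,a)\, \UpiP(s,a,s')$ at the integral level and absorbs the same normalization into its final identity for $A_{\pi,p}^{\pi,p'}(s,a)$.
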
 

This result has a meaningful interpretation. In order to assess the one-step performance improvement in state $s \in \Ss$ obtained moving from policy-configuration pair $(\pi,p) $ to the new one $(\pi',p')$, \ie $A_{\pi,p}^{\pi',p'}(s)$, we can add the contribution of two terms: (i) the one-step performance improvement due to the policy, \ie $A_{\pi,p}^{\pi',p}(s)$; (ii) the one-step performance improvement due to the configuration, \ie $A_{\pi,p}^{\pi,p'}(s,a)$, with the action sampled from the new policy $\pi'$.

\section{Bound on the Performance Improvement}\label{sec:boundPI}
In this section, we make use of the results provided in the previous sections, to derive lower bounds on the performance improvement yielded by changing the policy and the configuration. We start the section by revising the performance difference lemma for Conf-MDPs, as derived in~\cite{metelli2018configurable}. Then, we proceed at proving the performance improvement lower bounds. Specifically, in Section~\ref{sec:copPP}, we derive a tight lower bound that, unfortunately, is hardly usable in practice since it involves the presence of non-easily computable quantities and keeps together the contributions of the policy and the configuration (coupled). Then, in Section~\ref{sec:Dec1L}, we derive a looser bound that separates the effect of policy and configuration (decoupled) and just requires the LC property.

\begin{restatable}[Performance Difference Lemma - Theorem 3.1 of~\cite{metelli2018configurable}]{thr}{perfImprovement}
\label{thr:perfImprovement}
Let $\ConfMDP$ be a Conf-MDP. The performance improvement of policy-configuration pair $(\pi', p') \in \PP \times \PPP$ over $(\pi, p) \in \PP \times \PPP$ is given by:
    \begin{equation*}
    	J_{\pi',p'} - J_{\pi,p} = \frac{1}{1-\gamma} \int_{\mathcal{S}} \piPD[\pi'][p'](\de s) A_{\pi,p}^{\pi',p'}(s) .
    \end{equation*}
\end{restatable} 

This result represents the extension of the well-known performance difference lemma of~\cite{kakade2002approximately} to the case in which we are allowed to change the transition model (\ie configuration) too. The meaning of the result can be summarized as follows. The performance improvement $J_{\pi',p'} - J_{\pi,p}$ attained by moving from pair $(\pi,p)$ to pair $(\pi',p')$ is computed by means of the coupled relative advantage function $A_{\pi,p}^{\pi',p'}$, averaged over the $\gamma$-discounted stationary distribution $\piPD[\pi'][p']$, induced by the new pair $(\pi',p')$.

\subsection{Coupled Bound}\label{sec:copPP}
From a practical perspective, the expression of Theorem~\ref{thr:perfImprovement} is an equality but cannot be directly used since it requires computing (or estimating) an expectation \wrt the $\gamma$-discounted distribution $\piPD[\pi'][p']$, depending on the new pair $(\pi',p')$ that we have not access to. Instead, we would like to obtain a performance improvement lower bound that can be estimated by using the current pair $(\pi,p)$. The following result serves the purpose.

\begin{restatable}[Coupled Bound]{thr}{boundCoupled}\label{thr:boundCoupled}
	Let $\ConfMDP$ be a $L_r$-LC Conf-MDP, let $\pi,\pi' \in \PP$ be two $L_{\pi},L_{\pi'}$-LC policies respectively, and let $p,p'\in\PPP$ be two $L_{p},L_{p'}$-LC configurations. The performance improvement of policy-configuration pair $(\pi', P') $ over $(\pi, P) $ is lower bounded as:
    \begin{equation*}
		\underbrace{J_{\pi', {p}'} - J_{\pi, {p}}}_{\substack{\text{performance}\\\text{improvement}}} \ge \underbrace{\frac{1}{1-\gamma} \mathbb{A}_{\pi,{p}}^{\pi',{p}'} }_{\text{advantage}} - \underbrace{\frac{\gamma}{(1-\gamma)(1-\gamma L_{p'}(1+L_{\pi'}))} \, \left\| A_{\pi,p}^{\pi',p'} \right\|_L \int_{\Ss} \piPD(\de s) \mathcal{W}{\left(\piP[\pi'][p'](\cdot|s) , \piP(\cdot|s) \right)}  }_{\text{dissimilarity penalization}}.
	\end{equation*}
\end{restatable}

\begin{proof}	
	Exploiting the bounds on the $\gamma$-discounted state distributions difference (Theorem~\ref{thr:distributionsBoundCoupled}) we can easily derive the performance improvement bound:
	\begin{align}
		J_{\pi',p'} - J_{\pi,p} & = \frac{1}{1-\gamma} \int_{\mathcal{S}}\piPD[\pi'][p'](\de s) A_{\pi,p}^{\pi',p'}(s)  \notag\\
		& = \frac{1}{1-\gamma} \int_{\mathcal{S}} \piPD(\de s) A_{\pi,p}^{\pi',p'}(s) +  \frac{1}{1-\gamma} \int_{\mathcal{S}} \left( \piPD[\pi'][p'](\de s) - \piPD(\de s) \right) A_{\pi,p}^{\pi',p'}(s)   \label{line:cb1}\\
		& \ge \frac{\mathbb{A}_{\pi,p}^{\pi',p'}}{1-\gamma} - \frac{1}{1-\gamma} \bigg|  \int_{\mathcal{S}} \left(\piPD[\pi'][p'](\de s) - \piPD(\de s)  \right) A_{\pi,p}^{\pi',p'}(s) \bigg| \label{line:cb2}\\
		& \geq   \frac{\mathbb{A}_{{\pi,p}}^{\pi',p'}}{1 - \gamma} -  \frac{\gamma}{(1-\gamma)(1-\gamma L_{p'}(1+L_{\pi'}))}  \int_{\Ss} \piPD(\de s) {\mathcal{W}}\left( \piP[\pi'][p'](\cdot|s) , \piP(\cdot|s) \right)  \left\| A_{\pi,p}^{\pi',p'}\right\|_L, \label{line:cb4}
	\end{align}	
	where line~\eqref{line:cb2} follows from line~\eqref{line:cb1} by observing that $b \ge -|b|$ for any $b \in \Reals$ and line~\eqref{line:cb4} is obtained by using Theorem~\ref{thr:distributionsBoundCoupled} and the definition of 
Lipschitz semi-norm.
\end{proof}

The resulting bound is made of two terms, as the commonly known bounds in the literature~\citep{kakade2002approximately,PirottaRPC13, MetelliPCR21}. The first term, \emph{advantage}, accounts for the improvement in performance that can be obtained locally by replacing the pair $(\pi,p)$ with the new one $(\pi',p')$. Instead, the second term, \emph{dissimilarity penalization}, disincentives a choice of the next pair that is too dissimilar from the current one. As expected, the multiplicative constant in front of the Wasserstein distance between the state-state transition kernels involves the Lipschitz constants of all the constitutive elements of the Conf-MDP, policy, and configuration.

\subsection{Decoupled Bound}\label{sec:Dec1L}
Despite its generality, the bound of Theorem~\ref{thr:boundCoupled} is hardly usable in practice because of the need for computing the Lipschitz semi-norm $\| A_{\pi,p}^{\pi',p'}\|_L$. Intuitively, this factor strictly depends on the similarity between the policy-configuration pairs $(\pi,p)$ and $(\pi',p')$. Indeed, when $(\pi,p) = (\pi',p')$ a.s., we have that $A_{\pi,p}^{\pi',p'}(s) = 0$ for all $s \in \Ss$ and, consequently, $\| A_{\pi,p}^{\pi',p'}\|_L = 0$. Unfortunately, obtaining a bound on this Lipschitz semi-norm that depends on some form of distance between the pairs  $(\pi,p)$ and $(\pi',p')$ is technically challenging and may require additional assumptions. Thus, we defer this goal to future works, in this section, we focus on a simpler (and looser) bound that makes use of the LC condition only, accepting to bound $\| A_{\pi,p}^{\pi',p'}\|_L$ with a constant independent from the distance between the pairs  $(\pi,p)$ and $(\pi',p')$. The following result provides such a bound.

\begin{restatable}[Decoupled Bound for Lipschitz Conf-MDPs]{thr}{boundLoose}\label{thr:boundLoose}
Let $\ConfMDP$ be a $L_r$-LC Conf-MDP, let $\pi,\pi' \in \PP$ be two $L_{\pi},L_{\pi'}$-LC policies respectively, and let $p,p'\in\PPP$ be two $L_{p},L_{p'}$-LC configurations. The performance improvement of policy-configuration pair $(\pi', P') $ over $(\pi, P) $ is lower bounded as:
    \begin{align*}
		J_{\pi', {p}'} - J_{\pi, {p}} \ge \frac{\mathbb{A}_{\pi,{p}}^{\pi',{p}} + \mathbb{A}_{\pi,{p}}^{\pi,{p}'}}{1-\gamma}  - &  \Bigg(  c_1 \int_{\SAs}\piPD(\de s) \pi(\de a |s) \mathcal{W}{\left( p'(\cdot|s,a), p(\cdot|s,a) \right) } \\
    & \qquad +c_2  \int_{\Ss}\piPD(\de s)  \mathcal{W}{\left( \pi'(\cdot|s), \pi(\cdot|s) \right) } \Bigg)  ,
	\end{align*}
	where $c_1$ and $c_2$ are constant whose values are made explicit in the proof and depend on $\gamma$, $L_r$, $L_{\pi}$, $L_{\pi'}$, $L_p$, and $L_{p'}$.
\end{restatable}

Some observations are in order. First, in order to obtain this result, we need to require that the Conf-MDP and both pairs $(\pi,p)$ and $(\pi',p')$ to be LC. Indeed, the constants $c_1$ and $c_2$ will depend on the Lipschitz semi-norm of the Conf-MDP elements, and on the Lipschitz constants of both policies and both configurations. Second, Theorem~\ref{thr:boundLoose} represents an decoupled bound, that separates the effects of the policy and the configuration. This is made evident in the advantage term too that is now replaced with the sum of the expected decoupled relative advantages. Third, and more important, the bound displays a linear dependence on the Wasserstein distance between the transition models and the policies. Compared to well-known bounds in the literature in which the square of the divergence is present (typically TV divergence), such as~\cite{PirottaRPC13}, this represents a looser dependence.

\section{Conclusions}
In this paper, we have investigated the performance improvement bounds for Conf-MDP under Lipschitz regularity. We have first derived bounds on the Wasserstein distance between $\gamma$-discounted stationary distributions that generalize existing bounds in the literature. Then, we have provided two performance improvement lower bounds. Future promising works include the employment of the derived bounds for devising safe and trust region learning algorithms.

\bibliographystyle{apalike}
\bibliography{ms}

\section{Proofs Omitted in the Main Paper}
In this appendix, we report the proofs we have omitted in the main paper.

\BoundDisj*
\begin{proof}
	Let $s \in \Ss$, we start from Theorem~\ref{thr:distributionsBoundCoupled} and apply triangular inequality:
	\begin{align*}
		\mathcal{W}{\left( \piP[\pi'][p'](\cdot|s), \piP[\pi][p](\cdot|s) \right) } \le \mathcal{W}{\left( \piP[\pi'][p'](\cdot|s), \piP[\pi][p'](\cdot|s) \right) } + \mathcal{W}{\left( \piP[\pi][p'](\cdot|s), \piP[\pi][p](\cdot|s) \right) }. 
	\end{align*}
	For the first term, we have:
	\begin{align}
		\int_{\Ss} \piPD(\de s) \mathcal{W}{\left( \piP[\pi'][p'](\cdot|s), \piP[\pi][p'](\cdot|s) \right) } & = \int_{\Ss} \piPD(\de s) \suplip \left| \int_{\Ss} \left(  \piP[\pi'][p'](\de s'|s)- \piP[\pi][p'](\de s'|s) \right)  f(s') \right| \notag\\
		& = \int_{\Ss} \piPD(\de s) \suplip \left|  \int_{\As} \left( \pi'(\de a|s) - \pi(\de a|s) \right) \underbrace{\int_{\Ss}  p'(\de s'|s,a)  f(s')}_{\eqqcolon h_f(s,a)} \right| \notag\\
		& \le \suplip \lip{h_f} \int_{\Ss} \piPD(\de s) \wass{ \pi'(\cdot|s)}{ \pi(\cdot|s) },\label{eq:901}
	\end{align}
	where the lats passage follows from the definition of Wasserstein distance and Lipschitz semi-norm. We now compute the Lipschitz semi-norm $\lip{h_f}$. Let $(s,a),(\overline{s},\overline{a}) \in \SAs$:
	\begin{align*}
		h_f(s,a) - h_f(\overline{s},\overline{a})  = \int_{\Ss} \left( p'(\de s'|s,a) - p'(\de s'|\overline{s},\overline{a}) \right) f(s') \le \wass{p'(\cdot|s,a)}{p'(\cdot|\overline{s},\overline{a})} \le L_{p'} d_{\SAs}((s,a),(\overline{s},\overline{a})).
	\end{align*}
	For the second term, we have:
	\begin{align*}
		\int_{\Ss} \piPD(\de s) \mathcal{W}{\left( \piP[\pi][p'](\cdot|s), \piP[\pi][p](\cdot|s) \right) } & = \int_{\Ss} \piPD(\de s)  \suplip \left| \int_{\Ss} \left(  \piP[\pi][p'](\de s'|s)- \piP[\pi][p](\de s'|s) \right)  f(s') \right| \notag\\
		& = \int_{\Ss} \piPD(\de s) \suplip \left|  \int_{\As} \pi(\de a|s) \int_{\Ss} \left(  p'(\de s'|s,a)- p(\de s'|s,a) \right)  f(s') \right| \notag\\
		& \le \int_{\Ss} \piPD(\de s) \int_{\As} \pi(\de a|s)  \suplip \left| \int_{\Ss} \left(  p'(\de s'|s,a)- p(\de s'|s,a) \right)  f(s') \right|\notag \\
		& = \int_{\Ss} \piPD(\de s) \int_{\As} \pi(\de a|s)  \wass{ p'(\de s'|s,a)}{p(\de s'|s,a)},\label{eq:902}
	\end{align*}
	where the inequality follows from Jensen's inequality. Putting all together, we get the result.
\end{proof}

\lemmaAdvantage*
\begin{proof}
Let $s\in \Ss$, let us consider the following derivation:
	\begin{align}
		A_{\pi,p}^{\pi',p'}(s) & = \int_{\mathcal{A}} \int_{\mathcal{S}} \pi'(\de a|s)  p'(\de s'|s,a) \UpiP(s,a,s')  - \VpiP(s)  \notag \\
			& = \int_{\mathcal{A}} \int_{\mathcal{S}} \pi'(\de a|s)  p'(\de s'|s,a) \UpiP(s,a,s')  - \VpiP(s) \pm \int_{\mathcal{A}} \int_{\mathcal{S}} \pi'(\de a|s)  p(\de s'|s,a) \UpiP(s,a,s')  \notag \\
			& = \int_{\mathcal{A}} \int_{\mathcal{S}} \pi'(\de a|s)  p(\de s'|s,a) \UpiP(s,a,s') - \VpiP(s)\\
			& \quad +  \int_{\mathcal{A}} \int_{\mathcal{S}} \pi'(\de a|s)  \left( p'(\de s'|s,a) - p(\de s'|s,a) \right) \UpiP(s,a,s')  \notag \\
			& = \int_{\mathcal{A}} \pi'(\de a|s)  \QpiP(s,a) \mathrm{d}a - \VpiP(s)  + \int_{\mathcal{A}}  \pi'(\de a|s) \int_{\mathcal{S}} \left( p'(\de s'|s,a) - p(\de s'|s,a) \right) \UpiP(s,a,s')  \label{linep:0}\\
			& = A_{\pi,p}^{\pi',p}(s) + \int_{\mathcal{A}} \pi'( \mathrm{d}a|s) A_{\pi,p}^{\pi,p'}(s,a), \label{linep:2}
	\end{align}
	where line~\eqref{linep:0} is obtained by recalling that $\QpiP(s,a) = \int_{\Ss} p(\de s'|s,a) \UpiP(s,a,s')$, the first addendum of line~\eqref{linep:2} follows from observing that: 
	$$A_{\pi,p}^{\pi',p}(s) = \int_{\mathcal{A}} \pi'(\de a|s)A^{\pi,p}(s,a) = \int_{\mathcal{A}} \pi'(\de a|s)\left( \QpiP(s,a) - \VpiP(s) \right),$$
and similarly the second addendum of line~\eqref{linep:2} comes from the identity:
\begin{align*}
A_{\pi,p}^{\pi,p'}(s,a) & = \int_{\mathcal{S}} p'(\de s'|s,a) A^{\pi,p}(s,a,s') = \int_{\mathcal{S}} p'(\de s'|s,a) \left( \UpiP(s,a,s') - \QpiP(s,a) \right).
\end{align*}
\end{proof}

\perfImprovement*
\begin{proof}
	Let us start from the definition of $J^{\pi',P'}$:
	\begin{align}
		(1-\gamma)  J_{\pi', p'} & = \int_{\mathcal{S}} \int_{\mathcal{A}} \int_{\Ss} \piPD[\pi'][p'](\de s) \pi'(\de a|s) p'(\de s'|s,a) r(s,a,s') \nonumber \\
        & = \int_{\mathcal{S}} \int_{\mathcal{A}} \int_{\Ss} \piPD[\pi'][p'](\de s) \pi'(\de a|s) p'(\de s'|s,a) r(s,a,s')  \pm \int_{\mathcal{S}} \piPD[\pi'][p'](\de s) \VpiP(s)  \label{line:impr1}\\ 
        & = \int_{\mathcal{S}} \int_{\mathcal{A}} \int_{\Ss} \piPD[\pi'][p'](\de s) \pi'(\de a|s) p'(\de s'|s,a) r(s,a,s') \label{line:impr2} \\ 
        & \quad +  \int_{\mathcal{S}} \left( (1-\gamma)\isd(\de s') + \gamma \int_{\mathcal{S}} \int_{\mathcal{A}} \piPD[\pi'][p'](\de s) \pi'(\de a|s)p'(\de s'|s,a)  \right) \VpiP(s')  \nonumber \\
        & \quad - \int_{\mathcal{S}}  \piPD[\pi'][p'](\de s) \VpiP(s)   \nonumber \\
        & = \int_{\mathcal{S}}  \piPD[\pi'][p'](\de s) \left( \int_{\mathcal{A}} \pi'(\de a|s)   \int_{\mathcal{S}} p'(\de s'|s,a) \left( r(s,a,s') + \gamma \VpiP(s') \right) - \VpiP(s) \right)  \notag\\
        & \quad + \int_{\mathcal{S}} \isd(\de s') \VpiP(s')  \nonumber \\
        & =  \int_{\mathcal{S}} \piPD[\pi'][p'](\de s) A_{\pi,p}^{\pi',p'}(s)  + (1-\gamma) J_{\pi,p},\label{line:impr4}
    \end{align}
	where we have made use of the recursive formulation of $\piPD[\pi'][p']$ (Equation~\ref{eq:discDistr}) to rewrite line (\ref{line:impr1}) into line (\ref{line:impr2}) and line~\eqref{line:impr4} follows by observing that $\int_{\mathcal{S}} \isd(\de s') \VpiP(s')  = J_{\pi,p}$ and using the definition $\UpiP(s,a,s') =  r(s,a,s') + \gamma \VpiP(s')$.
\end{proof}

\boundLoose*
\begin{proof}
	We provide a bound on the Lipschitz semi-norm $\left\| A_{\pi,p}^{\pi',p'} \right\|_L $. Let $s,\overline{s} \in \Ss$:
	\begin{align*}
	 A_{\pi,p}^{\pi',p'}(s) - A_{\pi,p}^{\pi',p'}(\overline{s}) & = \int_{\SAs} \pi'(\de a|s) p'(\de s'|s,a) \UpiP(s,a,s') - \VpiP(s) \\
	 & \quad -  \int_{\SAs} \pi'(\de a |\overline{s}) p'(\de s'|\overline{s},a) \UpiP(\overline{s},a,s') - \VpiP(\overline{s}) \\
	 & = \underbrace{\int_{\SAs} \pi'(\de a|s) p'(\de s'|s,a)( \UpiP(s,a,s') - \UpiP(\overline{s},a,s'))}_{\text{(a)}} \\
	 & \quad + \underbrace{\int_{\SAs} \left(\pi'(\de a|s) p'(\de s'|s,a) - \pi'(\de a |\overline{s}) p'(\de s'|\overline{s},a) \right)\UpiP(\overline{s},a,s') }_{\text{(b)}} \\
	 & \quad + \underbrace{\VpiP(\overline{s}) - \VpiP(s)}_{\text{(c)}}\\
	\end{align*}
	\paragraph{Term (a)} Concerning term (a), we exploit the Lipschitz semi-norm of the $\UpiP$ function, to obtain:
	\begin{align*}
		\int_{\SAs} \pi'(\de a|s) p'(\de s'|s,a)( \UpiP(s,a,s') - \UpiP(\overline{s},a,s')) \le \sup_{(a,s') \in \mathcal{A}\times \mathcal{S}} \lip{\UpiP(\cdot,a,s')} d_{\Ss}(s,\overline{s}).
	\end{align*}		
	\paragraph{Term (b)}  Concerning term (b), we need to perform further manipulations:
	\begin{align*}
	\int_{\SAs} &  \left(\pi'(\de a|s) p'(\de s'|s,a) - \pi'(\de a |\overline{s}) p'(\de s'|\overline{s},a) \right)  \UpiP(\overline{s},a,s') = \\
	&\qquad = \int_{\As} \left(\pi'(\de a|s)  - \pi'(\de a |\overline{s})  \right)\underbrace{\int_{\Ss} p'(\de s'|\overline{s},a) \UpiP(\overline{s},a,s') }_{\eqqcolon h(\overline{s},a)} \\
	& \qquad\quad + \int_{\SAs} \pi'(\de a|s) \left(  p'(\de s'|s,a) - p'(\de s'|\overline{s},a)  \right) \UpiP(\overline{s},a,s')
\end{align*}		
	For the first integral, we need to compute the Lipschitz semi-norm of $h(\overline{s},a)$. Let $(\overline{s},a),(\overline{s}',a') \in \SAs$:
	\begin{align*}
	h(\overline{s},a) - h(\overline{s}',a') & = \int_{\Ss} p'(\de s'|\overline{s},a) \UpiP(\overline{s},a,s')  - \int_{\Ss} p'(\de s'|\overline{s}',a') \UpiP(\overline{s}',a',s')  \\
	& = \int_{\Ss} \left(p'(\de s'|\overline{s},a) - p'(\de s'|\overline{s}',a') \right)\UpiP(\overline{s},a,s') \\
	& \quad + \int_{\Ss} p'(\de s'|\overline{s}',a') (\UpiP(\overline{s},a,s') -\UpiP(\overline{s}',a',s'))\\
	& \le \sup_{s,a \in \SAs} \lip{\UpiP(s,a,\cdot)} \wass{p'(\cdot|\overline{s},a)}{p'(\cdot|\overline{s}',a')} + \sup_{s' \in \Ss} \lip{\UpiP(\cdot,\cdot,s')} d_{\SAs}((\overline{s},a),(\overline{s}',a')) \\
	& \le \left( \sup_{s,a \in \SAs} \lip{\UpiP(s,a,\cdot)} L_{p'} +  \sup_{s' \in \Ss} \lip{\UpiP(\cdot,\cdot,s')} \right)d_{\SAs}((\overline{s},a),(\overline{s}',a')).
	\end{align*}
	Thus, we have:
	\begin{align*}
		\int_{\As} \left(\pi'(\de a|s)  - \pi'(\de a |\overline{s})  \right) h(\overline{s},a) & \le  \wass{\pi'(\cdot|s)}{\pi'(\cdot |\overline{s}) } \left( \sup_{s,a \in \SAs} \lip{\UpiP(s,a,\cdot)} L_{p'} +  \sup_{s' \in \Ss} \lip{\UpiP(\cdot,\cdot,s')} \right) \\
		& \le L_{\pi'} \left( \sup_{s,a \in \SAs} \lip{\UpiP(s,a,\cdot)} L_{p'} +  \sup_{s' \in \Ss} \lip{\UpiP(\cdot,\cdot,s')} \right) d_{\Ss}(s,\overline{s}).
	\end{align*}
	For the second integral, we can simply proceed as follows:
	\begin{align*}
		\int_{\SAs} \pi'(\de a|s)&  \left(  p'(\de s'|s,a) - p'(\de s'|\overline{s},a)  \right) \UpiP(\overline{s},a,s') \\
		& \le \sup_{(s,a) \in \SAs} \lip{\UpiP(s,a,\cdot)} \int_{\SAs} \pi'(\de a|s) \wass{p'(\de s'|s,a)}{p'(\de s'|\overline{s},a) } \\
		& \le \sup_{(s,a) \in \SAs} \lip{\UpiP(s,a,\cdot)} L_{p'} d_{\Ss}(s,\overline{s}).
	\end{align*}
	\paragraph{Term (c)} Concerning term (c), it is immediate to verify that $\VpiP(\overline{s}) - \VpiP(s) \le \lip{\VpiP} d_{\Ss}(s,\overline{s})$.
	
		Finally, we observe that $\lip{\UpiP} \ge \max\left\{ \sup_{s,a \in \SAs} \lip{\UpiP(s,a,\cdot)}, \sup_{s' \in \Ss} \lip{\UpiP(\cdot,\cdot,s')}\right\}$. Thus, $\lip{h} \le (1+L_{p'}) \lip{\UpiP}$. With simple algebraic manipulation, we derive that:
		\begin{align*}
		\left\| A_{\pi,p}^{\pi',p'} \right\|_L & \le \lip{\UpiP} + \lip{\UpiP} L_{\pi'} (1+L_{p'}) + L_{p'}\lip{\UpiP} + \lip{\VpiP}\\
		& \le \lip{\UpiP} (1 + (L_{\pi'}+1)(L_{p'}+1)) \le 2\lip{\UpiP} (L_{\pi'}+1)(L_{p'}+1).
		\end{align*}
		By using Lemma~\ref{lemma:decompAdv}, we have:
		\begin{align*}
		 \mathbb{A}_{\pi,p}^{\pi',p'} & \ge  \mathbb{A}_{\pi,p}^{\pi,p'} + \mathbb{A}_{\pi,p}^{\pi',p} - \int_{\Ss} \mu_{\pi,p}(\de s) \wass{\pi'(\cdot|s)}{\pi(\cdot|s)} \lip{A_{\pi,p}^{\pi,p'}} \\
		 & \ge  \mathbb{A}_{\pi,p}^{\pi,p'} + \mathbb{A}_{\pi,p}^{\pi',p} - \int_{\Ss} \mu_{\pi,p}(\de s) \wass{\pi'(\cdot|s)}{\pi(\cdot|s)} \left\| A_{\pi,p}^{\pi',p'} \right\|_L \\
		 & \ge \mathbb{A}_{\pi,p}^{\pi,p'} + \mathbb{A}_{\pi,p}^{\pi',p} -2 \lip{\UpiP} (L_{\pi'}+1)(L_{p'}+1) \int_{\Ss} \mu_{\pi,p}(\de s) \wass{\pi'(\cdot|s)}{\pi(\cdot|s)},
		\end{align*}
		having observed that $\lip{A_{\pi,p}^{\pi,p'}} \le \left\| A_{\pi,p}^{\pi',p'} \right\|_L$. By exploiting Corollary~\ref{coroll:BoundDisj}, we get the result, having defined the constants $c_1$ and $c_2$ as follows:
		\begin{align*}
		c_1 = \lip{\UpiP}  \frac{2 \gamma (L_{\pi'}+1)(L_{p'}+1)}{(1-\gamma)(1-\gamma L_{p'}(1+L_{\pi'}))}, \qquad c_2 = \lip{\UpiP}  \frac{2 (1+\gamma L_{p'}) (L_{\pi'}+1)(L_{p'}+1)}{(1-\gamma)(1-\gamma L_{p'}(1+L_{\pi'}))} .
		\end{align*}
\end{proof}

\section{Technical Lemmas}\label{apx:tech}

In this appendix, we provide additional technical results that are employed in the proofs of the results presented in the main paper.

\begin{lemma}\label{lemma:lipSum}
Let $h_f(s) = \int_{\Ss} \piP[\pi'][p'](\de s'|s)   f(s')$ with $\|f\|_L \le 1$. Then, it holds that $\lip{h_f} \le L_{p'}(1+L_{\pi'})$.
\end{lemma}

\begin{proof}
For $s,\overline{s} \in \Ss$, we have:
	\begin{align*}
		h_f(s) - h_f(\overline{s})  & = \int_{\Ss} \left( \piP[\pi'][p'](\de s'|s)  -  \piP[\pi'][p'](\de s'|\overline{s}) \right) f(s') \\
		& = \int_{\Ss} \int_{\As} \left( \pi'(\de a|s) p'(\de s'|s,a)  -  \pi'(\de a|\overline{s}) p'(\de s'|\overline{s},a) \right) f(s') \\
		& =  \int_{\As}  \pi'(\de a|s) \int_{\Ss} \left( p'(\de s'|s,a)  -  p(\de s'|\overline{s},a) \right) f(s') + \int_{\As}  \left( \pi'(\de a|s) -\pi'(\de a|\overline{s}) \right) \underbrace{\int_{\Ss} p'(\de s'|s,a) f(s')}_{\eqqcolon l_f(s,a)}.
	\end{align*}
	For the first integral, we simply obtain:
	\begin{align*}
	\int_{\As}  \pi'(\de a|s) \int_{\Ss} \left( p'(\de s'|s,a)  -  p(\de s'|\overline{s},a) \right) f(s') \le \int_{\As}  \pi'(\de a|s) \wass{p'(\cdot|s,a)}{p'(\cdot|\overline{s},a)} \le L_{p'} d_{\Ss}(s,\overline{s}).
	\end{align*}
	For the second integral, we need to compute the Lipschitz semi-norm of $l_f$. Let $(s,a),(\overline{s},\overline{a}) \in \SAs$:
	\begin{align*}
	l_f(s,a) - l_f(\overline{s},\overline{a}) & = \int_{\Ss}  \left( p'(\de s'|s,a) - p'(\de s'|\overline{s},\overline{a})\right) f(s')  \\
	&  \le \wass{p'(\cdot|s,a)}{p'(\cdot|\overline{s},\overline{a})} \le L_{p'} d_{\SAs}((s,a),(\overline{s},\overline{a})).
	\end{align*}
	By plugging this into the second integral we obtain:
	\begin{align*}
	\int_{\As}  \left( \pi'(\de a|s) -\pi'(\de a|\overline{s}) \right) l_f(s,a) \le L_{p'} \wass{\pi'(\cdot|s)}{\pi'(\cdot|\overline{s})} \le L_{p'}L_{\pi'} d_{\Ss}(s,\overline{s}).
	\end{align*}
\end{proof}

\begin{lemma}\label{lemma:decompAdv}
Let $\mathbb{A}_{\pi,p}^{\pi',p'}$ be the expected coupled relative advantage function, $\mathbb{A}_{\pi,p}^{\pi',p}$ and $ \mathbb{A}_{\pi,p}^{\pi,p'}$ be the expected (decoupled) policy and configuration relative advantage functions respectively. Then, it holds that:
	\begin{align*}
		\left| \mathbb{A}_{\pi,p}^{\pi',p'} - \left(\mathbb{A}_{\pi,p}^{\pi,p'} + \mathbb{A}_{\pi,p}^{\pi',p}\right) \right| \le \int_{\Ss} \mu_{\pi,p}(\de s) \wass{\pi'(\cdot|s)}{\pi(\cdot|s)} \lip{A_{\pi,p}^{\pi,p'}}.
	\end{align*}
\end{lemma}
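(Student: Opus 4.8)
The plan is to reduce the statement to the pointwise advantage decomposition already established in Lemma~\ref{thr:lemmaAdvantage} and then to control the single remaining mismatch---sampling the action from $\pi'$ rather than from $\pi$---by means of the Wasserstein duality used throughout the paper. First I would integrate the identity $A_{\pi,p}^{\pi',p'}(s) = A_{\pi,p}^{\pi',p}(s) + \int_{\As} \pi'(\de a|s) A_{\pi,p}^{\pi,p'}(s,a)$ against the $\gamma$-discounted stationary distribution $\mu_{\pi,p}$. This immediately yields $\mathbb{A}_{\pi,p}^{\pi',p'} = \mathbb{A}_{\pi,p}^{\pi',p} + \int_{\Ss}\int_{\As}\mu_{\pi,p}(\de s)\pi'(\de a|s) A_{\pi,p}^{\pi,p'}(s,a)$, in which the second term differs from $\mathbb{A}_{\pi,p}^{\pi,p'}$ only in that the action is drawn from $\pi'$ instead of $\pi$.

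Subtracting $\mathbb{A}_{\pi,p}^{\pi,p'} + \mathbb{A}_{\pi,p}^{\pi',p}$ therefore collapses the whole expression to $\int_{\Ss}\mu_{\pi,p}(\de s)\int_{\As}\left(\pi'(\de a|s) - \pi(\de a|s)\right) A_{\pi,p}^{\pi,p'}(s,a)$. I would then take absolute values and push them inside the outer $\mu_{\pi,p}$-integral (triangle inequality, using that $\mu_{\pi,p}$ is a nonnegative measure), leaving for each fixed $s$ the inner quantity $\left|\int_{\As}\left(\pi'(\de a|s) - \pi(\de a|s)\right) A_{\pi,p}^{\pi,p'}(s,a)\right|$ to be bounded.

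The final step is to recognize this inner integral as a Wasserstein-type pairing. For fixed $s$, the map $a \mapsto A_{\pi,p}^{\pi,p'}(s,a)$ is Lipschitz in $a$ with constant at most $\lip{A_{\pi,p}^{\pi,p'}}$, since restricting the joint $(s,a)$-Lipschitz bound to a fixed first argument gives $|A_{\pi,p}^{\pi,p'}(s,a) - A_{\pi,p}^{\pi,p'}(s,\overline{a})| \le \lip{A_{\pi,p}^{\pi,p'}} d_{\As}(a,\overline{a})$. Normalizing this function by $\lip{A_{\pi,p}^{\pi,p'}}$ makes it $1$-Lipschitz, so the definition of the Wasserstein distance bounds the inner integral by $\lip{A_{\pi,p}^{\pi,p'}}\wass{\pi'(\cdot|s)}{\pi(\cdot|s)}$; integrating back over $\mu_{\pi,p}$ delivers exactly the claimed inequality.

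I expect the only delicate point to be the bookkeeping in the last step: one must verify that the metric appearing in the Wasserstein distance on the action space coincides with the metric against which $A_{\pi,p}^{\pi,p'}(\cdot,\cdot)$ is Lipschitz, i.e.\ that passing from the joint $d_{\SAs}$-Lipschitz semi-norm to the marginal $d_{\As}$-Lipschitz constant in $a$ is legitimate. This causes no loss because $d_{\SAs}((s,a),(s,\overline{a})) = d_{\As}(a,\overline{a})$ by the disjoint-metric convention of Assumption~\ref{def:lppi}; everything else is a routine application of Jensen's inequality, the triangle inequality, and the Wasserstein duality already invoked in the earlier proofs.
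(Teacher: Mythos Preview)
Your proposal is correct and follows essentially the same route as the paper: apply Lemma~\ref{thr:lemmaAdvantage} pointwise, integrate against $\mu_{\pi,p}$, isolate the residual $\int_{\Ss}\mu_{\pi,p}(\de s)\int_{\As}(\pi'(\de a|s)-\pi(\de a|s))A_{\pi,p}^{\pi,p'}(s,a)$, push the absolute value inside, and bound the inner integral via Wasserstein duality and $\sup_{s}\lip{A_{\pi,p}^{\pi,p'}(s,\cdot)}\le\lip{A_{\pi,p}^{\pi,p'}}$. Your explicit justification of the last inequality via $d_{\SAs}((s,a),(s,\overline{a}))=d_{\As}(a,\overline{a})$ is exactly the observation the paper uses implicitly.
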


\begin{proof}
	We can rewrite the expected relative advantage $\mathbb{A}_{{\pi,p}}^{\pi',p'}$ using Lemma~\ref{thr:lemmaAdvantage}:
	\begin{align}
		\mathbb{A}_{{\pi,p}}^{\pi',p'} & = \int_{\mathcal{S}} \piPD(\de s) A_{\pi,p}^{\pi',p'}(s)  \nonumber \\
		& = \int_{\mathcal{S}} \piPD(\de s) \left( A_{\pi,p}^{\pi',p}(s) + \int_{\mathcal{A}} \pi'(\de a|s) A_{\pi,p}^{\pi,p'}(s,a)  \right)  \label{line:b0} \\
		& = \int_{\mathcal{S}}  \piPD(\de s) A_{\pi,p}^{\pi',p}(s) + \int_{\mathcal{S}} \int_{\mathcal{A}}  \piPD(\de s) \pi(\de a|s) A_{\pi,p}^{\pi,p'}(s,a) \nonumber \\
		& \quad + \int_{\mathcal{S}} \piPD(\de s)  \int_{\mathcal{A}} \left( \pi'(\de a|s) - \pi(\de a|s) \right) A_{\pi,p}^{\pi,p'}(s,a) \nonumber \\
		& =  \mathbb{A}_{{\pi,p}}^{\pi',p} + \mathbb{A}_{{\pi,p}}^{\pi,p'} + \int_{\mathcal{S}} \piPD(\de s)  \int_{\mathcal{A}} \left( \pi'(\de a|s) - \pi(\de a|s) \right) A_{\pi,p}^{\pi,p'}(s,a), \label{line:b1}
	\end{align}
	where line~\eqref{line:b0} comes from Lemma~\ref{thr:lemmaAdvantage}.	From Equation~\eqref{line:b1} we can straightforwardly state the following inequalities:
	\begin{align*}
		\mathbb{A}_{{\pi,p}}^{\pi',p'} & \geq \mathbb{A}_{{\pi,p}}^{\pi',p} + \mathbb{A}_{{\pi,p}}^{\pi,p'}  - \bigg| \int_{\mathcal{S}} \piPD(\de s)  \int_{\mathcal{A}} \left( \pi'(\de a|s) - \pi(\de a|s) \right) A_{\pi,p}^{\pi,p'}(s,a)  \bigg|, \\
		\mathbb{A}_{{\pi,p}}^{\pi',p'} & \leq \mathbb{A}_{{\pi,p}}^{\pi',p} + \mathbb{A}_{{\pi,p}}^{\pi,p'}  + \bigg| \int_{\mathcal{S}} \piPD(\de s)  \int_{\mathcal{A}} \left( \pi'(\de a|s) - \pi(\de a|s) \right) A_{\pi,p}^{\pi,p'}(s,a) \bigg|.
	\end{align*}
	Then, we bound the absolute value in the right hand side:
	\begin{align}
		\Big| \mathbb{A}_{{\pi,p}}^{\pi',p'} - \left(\mathbb{A}_{{\pi,p}}^{\pi',p} + \mathbb{A}_{{\pi,p}}^{\pi,p'} \right) \Big| & \leq \bigg| \int_{\mathcal{S}} \piPD(\de s)  \int_{\mathcal{A}} \left( \pi'(\de a|s) - \pi(\de a|s) \right) A_{\pi,p}^{\pi,p'}(s,a) \bigg| \notag \\
		& \leq \int_{\mathcal{S}} \piPD(\de s)  \left| \int_{\mathcal{A}}  \pi'(\de a|s) - \pi(\de a|s)  A_{\pi,p}^{\pi,p'}(s,a) \right| \notag \\
		& \leq \int_{\mathcal{S}}  \piPD(\de s)  \wass{\pi'(\cdot|s) }{\pi(\cdot|s) } \sup_{s \in \Ss} \lip{A_{\pi,p}^{\pi,p'}(s,\cdot)} \label{p:x5100},
	\end{align}
	where line~\eqref{p:x5100} follows from the definition of Wasserstein distance. The result is obtained by observing that $\sup_{s \in \Ss} \lip{A_{\pi,p}^{\pi,p'}(s,\cdot)} \le \lip{A_{\pi,p}^{\pi,p'}}$.
\end{proof}

\begin{lemma}\label{lemma:lipNormAdv}
	Let $\pi,\pi' \in \PP$ and $p,p' \in \PPP$ be two policies and configurations for a Conf-MDP. Then, it holds that:
	\begin{align*}
		\lip{A_{\pi,p}^{\pi',p'}} \le \lip{A_{\pi,p}^{\pi',p}} + (L_{\pi'} + 1) \left\| A_{\pi,p}^{\pi,p'} \right\|_L .
	\end{align*}
\end{lemma}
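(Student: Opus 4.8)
The plan is to start from the pointwise decomposition of the coupled advantage provided by Lemma~\ref{thr:lemmaAdvantage}, namely $A_{\pi,p}^{\pi',p'}(s) = A_{\pi,p}^{\pi',p}(s) + \int_{\As} \pi'(\de a|s) A_{\pi,p}^{\pi,p'}(s,a)$, and to control the Lipschitz semi-norm of each summand separately, exploiting the triangular inequality satisfied by $\lip{\cdot}$. The first summand is exactly the policy relative advantage, so it contributes $\lip{A_{\pi,p}^{\pi',p}}$ directly. All the technical work therefore concentrates on the second summand, which I denote $g(s) \coloneqq \int_{\As}\pi'(\de a|s) A_{\pi,p}^{\pi,p'}(s,a)$, in which the state $s$ enters \emph{both} through the integrating policy $\pi'(\cdot|s)$ and through the configuration advantage $A_{\pi,p}^{\pi,p'}(s,a)$.

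To compute $\lip{g}$ I would fix two states $s,\overline{s}\in\Ss$, form the difference $g(s)-g(\overline{s})$, and perform an add-and-subtract of the cross term $\int_{\As}\pi'(\de a|s) A_{\pi,p}^{\pi,p'}(\overline{s},a)$ to isolate two effects: the variation of $A_{\pi,p}^{\pi,p'}$ in its state argument (with the action integrated against the common policy $\pi'(\cdot|s)$), and the variation of $\pi'$ with the state (with the advantage frozen at $\overline{s}$). For the first effect, $\int_{\As}\pi'(\de a|s)\bigl(A_{\pi,p}^{\pi,p'}(s,a)-A_{\pi,p}^{\pi,p'}(\overline{s},a)\bigr)$ is bounded by $\lip{A_{\pi,p}^{\pi,p'}}\,d_{\Ss}(s,\overline{s})$, using that the action argument is held fixed so that $d_{\SAs}((s,a),(\overline{s},a)) = d_{\Ss}(s,\overline{s})$. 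For the second effect, $\int_{\As}\bigl(\pi'(\de a|s)-\pi'(\de a|\overline{s})\bigr)A_{\pi,p}^{\pi,p'}(\overline{s},a)$ is a pairing of a signed measure with a test function, so I would invoke the Kantorovich dual form of the Wasserstein distance to bound it by $\wass{\pi'(\cdot|s)}{\pi'(\cdot|\overline{s})}\,\lip{A_{\pi,p}^{\pi,p'}(\overline{s},\cdot)}$ after normalizing by the semi-norm. Since the action-only semi-norm is dominated by the joint one, $\lip{A_{\pi,p}^{\pi,p'}(\overline{s},\cdot)}\le\lip{A_{\pi,p}^{\pi,p'}}$, and since $\pi'$ is $L_{\pi'}$-LC, this effect contributes $L_{\pi'}\lip{A_{\pi,p}^{\pi,p'}}\,d_{\Ss}(s,\overline{s})$.

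Summing the two effects gives $\lip{g}\le (1+L_{\pi'})\lip{A_{\pi,p}^{\pi,p'}}$; adding the contribution $\lip{A_{\pi,p}^{\pi',p}}$ of the first summand and dividing by $d_{\Ss}(s,\overline{s})$ yields the claim. The main obstacle is the second effect: one must verify that the test function $A_{\pi,p}^{\pi,p'}(\overline{s},\cdot)$ paired against $\pi'(\cdot|s)-\pi'(\cdot|\overline{s})$ is Lipschitz in the action with a constant controlled by the full semi-norm $\lip{A_{\pi,p}^{\pi,p'}}$, so that the Wasserstein duality applies after rescaling, and the factor $(L_{\pi'}+1)$ rather than a separate dependence on $L_{p'}$ emerges precisely because the configuration enters only implicitly through $A_{\pi,p}^{\pi,p'}$. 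The remaining manipulations are routine once this add-and-subtract split is fixed.
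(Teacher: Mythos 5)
Your proposal is correct and follows essentially the same route as the paper's proof: decomposition via Lemma~\ref{thr:lemmaAdvantage}, triangle inequality for the semi-norm, add-and-subtract of a cross term, Kantorovich duality for the policy variation, and domination of the partial semi-norms $\lip{A_{\pi,p}^{\pi,p'}(s,\cdot)}$ and $\lip{A_{\pi,p}^{\pi,p'}(\cdot,a)}$ by the joint one. The only (immaterial) difference is the symmetric choice of cross term --- you freeze the advantage at $\overline{s}$ and the policy at $s$, while the paper does the opposite --- which yields the identical bound $(L_{\pi'}+1)\lip{A_{\pi,p}^{\pi,p'}}$.
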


\begin{proof}
	First of all, we exploit Lemma~\ref{thr:lemmaAdvantage} and	the fact that $\lip{\cdot}$ is a semi-norm and, thus, it satisfies the triangular inequality:
	\begin{align*}
		\lip{A_{\pi,p}^{\pi',p'}} = \lip{A_{\pi,p}^{\pi',p} + \int_{\As} \pi'(\de a|\cdot) A_{\pi,p}^{\pi,p'}(\cdot,a)} \le \lip{A_{\pi,p}^{\pi',p}} + \lip{\int_{\As} \pi'(\de a|\cdot) A_{\pi,p}^{\pi,p'}(\cdot,a)}. 
	\end{align*}
	For what concerns the second term, we proceed as follows for $s,\overline{s} \in \Ss$:
	\begin{align*}
		\int_{\As} \pi'(\de a|s) & A_{\pi,p}^{\pi,p'}(s,a)  - \int_{\As} \pi'(\de a|\overline{s}) A_{\pi,p}^{\pi,p'}(\overline{s},a) \\
		&  = \int_{\As} \pi'(\de a|s) A_{\pi,p}^{\pi,p'}(s,a) - \int_{\As} \pi'(\de a|\overline{s}) A_{\pi,p}^{\pi,p'}(\overline{s},a) \pm  \int_{\As} \pi'(\de a|\overline{s}) A_{\pi,p}^{\pi,p'}(s,a) \\
		& = \int_{\As} \left( \pi'(\de a|s) - \pi'(\de a|\overline{s})\right)  A_{\pi,p}^{\pi,p'}(s,a) + \int_{\As} \pi'(\de a|\overline{s}) \left( A_{\pi,p}^{\pi,p'}(s,a) - A_{\pi,p}^{\pi,p'}(\overline{s},a)\right) \\
		& \le \lip{A_{\pi,p}^{\pi,p'}(s,\cdot)} \wass{\pi'(\cdot|s)}{\pi(\cdot|s)} + \int_{\As} \pi'(\de a|\overline{s}) \lip{A_{\pi,p}^{\pi,p'}(\cdot,a)} d_{\Ss}(s,\overline{s}) \\
		& \le  L_{\pi'}\sup_{s \in \Ss} \lip{A_{\pi,p}^{\pi,p'}(s,\cdot)} d_{\Ss}(s,\overline{s}) + \sup_{a \in \As}  \lip{A_{\pi,p}^{\pi,p'}(\cdot,a)} d_{\Ss}(s,\overline{s}).
	\end{align*}
	Finally, we observe that $\left\| A_{\pi,p}^{\pi,p'} \right\|_L \ge \max\left\{ \sup_{a \in \As}  \lip{A_{\pi,p}^{\pi,p'}(\cdot,a)}, \sup_{s \in \Ss} \lip{A_{\pi,p}^{\pi,p'}(s,\cdot)} \right\}$.
\end{proof}

\end{document}